\DeclareMathOperator*{\argmax}{arg\,max}
\DeclareMathOperator*{\argmin}{arg\,min}
\newtheorem{theorem}{Theorem}
\newtheorem{assumption}{Assumption}
\newtheorem{definition}{Definition}
\newtheorem{problem*}{Problem}
\tikzstyle{vecArrow} = [thick, decoration={markings,mark=at position
\tikzstyle{innerWhite} = [semithick, white,line width=1.4pt, shorten >= 4.5pt]
\title{\LARGE \bf Probabilistic Data Association for Semantic SLAM at Scale}
\author{Elad Michael$^{1}$, Tyler Summers$^{2}$, Tony A. Wood$^{3}$, Chris Manzie$^{1}$, and Iman Shames$^{4}$% <-this % stops a space
\thanks{*This work was not supported by any organization}% <-this % stops a space
\thanks{$^{1}$Department of Electrical and Electronic Engineering, University of Melbourne
        {\tt\small \{eladm@student,manziec@,\}unimelb.edu.au}}%
\thanks{$^{2}$Control, Optimization, and Networks Lab, University of Texas at Dallas
        {\tt\small tyler.summers@utdallas.edu}}%
\thanks{$^{3}$SYCAMORE Lab, Ecole Polytechnique Federale de Lausanne (EPFL), Lausanne, Switzerland
        {\tt\small tony.wood@epfl.ch}}%
\thanks{$^{4}$CIICADA Lab, School of Engineering, Australian National University, Acton, ACT, 0200, Australia
        {\tt\small iman.shames@anu.edu.au}}%
}
\begin{document}

\maketitle
\thispagestyle{empty}
\pagestyle{empty}

%%%%%%%%%%%%%%%%%%%%%%%%%%%%%%%%%%%%%%%%%%%%%%%%%%%%%%%%%%%%%%%%%%%%%%%%%%%%%%%%
\begin{abstract}
With advances in image processing and machine learning, it is now feasible to incorporate semantic information into the problem of simultaneous localisation and mapping (SLAM). Previously, SLAM was carried out using lower level geometric features (points, lines, and planes) which are often view-point dependent and error prone in visually repetitive environments. Semantic information can improve the ability to recognise previously visited locations, as well as maintain sparser maps for long term SLAM applications. However, SLAM in repetitive environments has the critical problem of assigning measurements to the landmarks which generated them. In this paper, we use k-best assignment enumeration to compute marginal assignment probabilities for each measurement landmark pair, in real time. We present numerical studies on the KITTI dataset to demonstrate the effectiveness and speed of the proposed framework.
\end{abstract}

% The framework presented is flexible in the choice of assignment method (nearest neighbour, linear sum, JCBB), and scales to large numbers of measurements and landmarks.

% The assignment probabilities can be used in a pose graph optimisation to compute the maximum likelihood trajectory and landmark position, while providing robustness to incorrect and ambiguous data association.
%%%%%%%%%%%%%%%%%%%%%%%%%%%%%%%%%%%%%%%%%%%%%%%%%%%%%%%%%%%%%%%%%%%%%%%%%%%%%%%%
\section{Introduction}\label{sec:intro}

In 1956, a fried chicken restaurant in Marietta, Georgia constructed a ``... 56-foot-tall (17 m) steel-sided structure designed in the appearance of a chicken rising up from the top of the building.'' The so-called ``Big Chicken'' has become one of the most well recognised landmarks of the area. When navigating the town with locals, one may be directed to ``turn left after the Big Chicken'' or ``head one mile south of the Big Chicken''. By virtue of being a uniquely identifiable semantic landmark, the Big Chicken simplifies navigation and localisation in Marietta, Georgia. 

Recently, with advances in machine learning, semantic object recognition has become fast and reliable enough to incorporate into autonomous navigation algorithms. A semantic measurement, usually extracted from a photograph or a lidar point cloud, is a combination of a state estimate to localise the object and a semantic \emph{category}, e.g. car, person, or Big Chicken. Semantic maps have been of interest for more than a decade\cite{nuchter2008towards}, but earlier works are focused on the production of a semantic map for the user and do not incorporating the semantic information into the state estimation problem. While semantic landmarks do create more human-friendly maps, they also impart rotational and translational accuracy~\cite{doherty2020probabilistic}, filter out moving landmarks\cite{wang2019computationally}, and provide a natural loop closure mechanism due to their relative sparsity in the environment. However, even with their relative sparsity compared to geometric features such as ORB~\cite{rublee2011orb} descriptors, any successful SLAM algorithm must be robust to incorrect data associations.

There is a litany of available sensors and measurement types, including but not limited to: inertial measurements, photographic features, radar, and lidar. Regardless of measurement type however, all landmark based simultaneous landmark and mapping (SLAM) methods must be able to attribute new measurements to existing landmarks, i.e. solve the data association/assignment problem. Early SLAM algorithms assumed measurements were \emph{identifiable}~\cite{leonard1991mobile}, i.e. could be unambiguously attributed to the landmark which generated them. However, in a real world application, a SLAM implementation must be able to correctly assign measurements to landmarks as well as be robust to incorrect and ambiguous assignments. Measurements which are incorrectly assigned are not just wasted information, incorrect assignments introduce an incorrect bias and are the most common cause of algorithmic failure in SLAM~\cite{cadena2016past}. 

Many semantic SLAM implementations focus on the development of a novel feature, i.e. quadric or mesh state information, and simply use some method of maximum likelihood assignment~\cite{nicholson2018quadricslam,yang2019cubeslam,hosseinzadeh2018structure}. However, rather than rely on the feature extraction to approach the perfectly identifiable ideal, we focus on SLAM algorithms which are designed to be robust to noisy assignment data. A popular approach to assignment robustness is \emph{multiple hypothesis tracking} (MHT), in which a set of viable assignment hypotheses is maintained rather than the single maximum likelihood assignment. As new measurements are introduced, the set of hypotheses is branched to include additional assignments. In order to remain computationally feasible, multiple hypothesis tracking methods, as well as particle filter approaches~\cite{montemerlo2003fastslam}, require constant pruning of the set of hypotheses with statistical tests or weighted random sampling of the viable hypothesis pool. An example of semantic SLAM using MHT can be found in~\cite{bernreiter2019multiple}. The authors of~\cite{lu2021consensus} maintain the distribution of assignments at each keyframe, and use the modal probability assignment under the linear assignment model at each iteration for estimation. This allows for assignment reconsideration without parallel estimators like in MHT, but may be slow to modify the factor graph with the modal assignment at each iteration. 

Most relevant to this work, recently the authors of~\cite{doherty2019multimodal,bowman2017probabilistic} have used probabilistic assignment models to represent ambiguity with a single estimation rather than maintaining parallel estimations for each assignment. The weight given to each assignment in the estimation is computed by marginalising over the distribution of all assignments. However, the number of assignments grows combinatorially, and explicitly marginalising over the set of possible assignments is only feasible in small cases. In~\cite{bowman2017probabilistic}, the authors show that computing the marginalisation over the assignments is equivalent to computing the matrix permanent, using the linear assignment model. However, computing the matrix permanent is in itself a a $\#P-$complete problem. Approximation methods are included, but do not scale well with the size of the assignment problem. 

\paragraph*{Contributions} In this paper, we propose a method to approximate the marginalisation over the set of possible assignments using ranked assignments. This technique was popularized in multi-target tracking (Section 6.6 of~\cite{blackman1999design}), but has not yet been applied to probabilistic data association in semantic SLAM. In this work,
\begin{itemize}
    \item We construct a bound on the difference between the approximated marginalisation probabilities and the truth.
    \item We show that the ranked assignment approximation with $200$ assignments is as fast as the matrix permanent on small problems (fewer than $\approx 15$ rows/cols), and is orders of magnitude faster on larger problems.
    \item We show that using the top $200$ assignments was generally \emph{more} accurate than the fastest permanent approximation method over the extracted KITTI data set problems.
    \item We provide the code used to generate the semantic assignment datasets as well as for computing the assignments, matrix permanent, and generating the result graphs. 
\end{itemize}

Finally, the framework we propose is agnostic in the method of generating the likeliest assignment from a set of measurements and landmarks, making this approach flexible to using nearest neighbour, linear sum assignment~\cite{jonker1987shortest}, or other methods such as joint compatibility branch and bound~\cite{tardos2001data} to determine the likeliest assignment.

\section{Probabilistic SLAM Background}\label{sec:probSLAM}

We differentiate between three categories of measurements: inertial, geometric, and semantic. Inertial measurements do not require data association, as they are generated by an onboard inertial measurement unit and use sensors such as accelerometers, gyroscopes, and wheel trackers to estimate the robots motion. Geometric measurements such as ORB\cite{rublee2011orb} descriptors, lidar, or radar are the primary tool of many SLAM implementations, and do require data association. Geometric measurements are generated by landmarks in the environment, but are high density and relatively non-unique. Data association with geometric measurements focuses on speed and leverages nearest neighbour type associations with high rejection thresholds~\cite{orbslam3}. Using probabilistic data association methods on these high density and high frequency measurements is not currently computationally feasible or necessary. Semantic measurements are significantly more sparse than geometric measurements, as well as less dependent on viewpoint or lighting conditions. Therefore, we only apply the probabilistic data assignment to semantic measurements, to ensure robust and correct data association and loop closures.

We begin with a probabilistic characterisation of the SLAM problem, 
\begin{align}
\mathcal{X}^*,\mathcal{L}^* = \argmax_{\mathcal{X},\mathcal{L},\mathcal{A}} \log p(\mathcal{Z} \mid \mathcal{X},\mathcal{L},\mathcal{A}),\label{eq:SLAMwAss}
\end{align}
where $\mathcal{X}$ represents the sequence of robot states, $\mathcal{L}$ the set of landmark states, $\mathcal{A}$ encodes the assignment of measurements to landmarks, and $\mathcal{Z}$ the sequence of measurements. We will use $z_k\in\mathcal{Z}$ to mean the $k$-th \emph{feature} measurement, i.e. a semantic/geometric feature extracted from a photograph or an inertial measurement extracted from an IMU.
\begin{assumption}\label{ass:oneLandPerMeas}
Each measurement $z_k\in\mathcal{Z}$ is generated by \emph{at most one} landmark. 
\end{assumption}
A common approach to solve~\eqref{eq:SLAMwAss} is to separate the global estimation into a continuous pose estimation~\eqref{eq:poseGivenAss} and a discrete assignment estimation problem~\eqref{eq:assGivenPose}. An initial estimate of the robot states and landmarks $\mathcal{X}^{(0)}$ and $\mathcal{L}^{(0)}$ is used to compute the assignment $\mathcal{A}^{(0)}$, which is in turn taken to compute a new estimate of the robot states and landmarks $\mathcal{X}^{(1)}$ and $\mathcal{L}^{(1)}$. This type of coordinate descent, see~\cite{fatemi2015variational} for example, iterates between
\begin{align}
\mathcal{X}^{(i+1)},\mathcal{L}^{(i+1)} &= \argmax_{\mathcal{X},\mathcal{L}} \log p(\mathcal{Z} \mid \mathcal{X},\mathcal{L},\mathcal{A}^{i}), \label{eq:poseGivenAss}\\
\mathcal{A}^{(i+1)} &= \argmax_{\mathcal{A}} p(\mathcal{A} \mid \mathcal{X}^{(i+1)},\mathcal{L}^{(i+1)},\mathcal{Z}), \label{eq:assGivenPose}
\end{align}
until the estimates converge or for some fixed number of iterations. However, this method does not consider the possibility of ambiguous associations, which may corrupt filter performance in the future, especially with filtering approaches\cite{castellanos2007robocentric}.

In~\cite{atanasov2016localization}, the authors argue that a better approach is to maximise the expectation over all possible associations rather than committing to any particular assignment. We present a summary of their analysis here, beginning with the expectation over the associations
\begin{align}
&\mathcal{X}^{i+1},\mathcal{L}^{i+1} = \argmax_{\mathcal{X},\mathcal{L}} \mathbb{E}_{\mathcal{A}} [\log p(\mathcal{Z} \mid \mathcal{X},\mathcal{L},\mathcal{A})\mid \mathcal{X}^{i}, \mathcal{L}^{i}, \mathcal{Z}]. \nonumber \\
&= \argmax_{\mathcal{X},\mathcal{L}} \sum_{\mathcal{A}\in\mathbb{A}} p(\mathcal{A} \mid \mathcal{X}^i, \mathcal{L}^i, \mathcal{Z}) \log p(\mathcal{Z} \mid \mathcal{X},\mathcal{L},\mathcal{A}),\nonumber\\
\begin{split}
&= \argmax_{\mathcal{X},\mathcal{L}}  \sum_{k=1}^{|\mathcal{Z}|} \sum_{j=1}^{|\mathcal{L}|} \sum_{\mathcal{A}\in\mathbb{A}_{kj}} [\; p(\mathcal{A} \mid \mathcal{X}^i, \mathcal{L}^i, \mathcal{Z}) \\
&\qquad \qquad \qquad \qquad \qquad \qquad \log p(z_k \mid x_{\alpha_k},l_j) \; ],\label{eq:expctExpansion}
\end{split}
\end{align}
where $\mathbb{A}_{kj}$ denotes the subset of all associations which assign measurement $z_k$ to landmark $l_j$, and $\alpha_k$ the index of the robot state from which $z_k$ was measured. Note that this is still an iterative process, as the probability of an association $A\in\mathcal{A}$ is a function of the given robot and landmark states. Finally, we may factor the log term outside of the innermost summation, as the assignment of $k$ to $j$ is constant within the subset of assignments in order to define
\begin{align}
w^i_{kj} &:= \sum_{\mathcal{A}\in\mathbb{A}_{kj}}  p(\mathcal{A} \mid \mathcal{X}^i, \mathcal{L}^i, \mathcal{Z}). \label{eq:probOfAssign}\\
\intertext{Substituting back into~\eqref{eq:expctExpansion},}
\mathcal{X}^{i+1},\mathcal{L}^{i+1} &= \argmax_{\mathcal{X},\mathcal{L}}  \sum_{k=1}^{|\mathcal{Z}|} \sum_{j=1}^{|\mathcal{L}|} w^i_{kj} \log p(z_k \mid x_{\alpha_k},l_j). \label{eq:expectationSLAM}
\end{align}

The expectation allows no hard decisions to be made over the assignments, and considers the full distribution of possible assignments. However, expectation maximisation introduces a new challenge, namely the computation of~\eqref{eq:probOfAssign}. The number of assignments grows \emph{combinatorially}, and in practice cannot be enumerated for anything other than single digit quantities of landmarks and measurements in real time. In~\cite{bowman2017probabilistic} they prove that, given some simplifying assumptions, the computation of~\eqref{eq:probOfAssign} is equivalent to computing the matrix permanent. However, computing the matrix permanent is itself a $\#P-$complete problem, with computational complexity $\mathcal{O}(d2^{d-1})$, or $\mathcal{O}(d^7(\log(d))^4)$ to compute an approximation of the permanent for $d\times d$ matrix. 

\section{K-Best Assignment Enumeration}\label{sec:primaryResults}

In this section we present our primary result, a fast method to approximate the probability of assignment $w^{i}_{kj}$ from~\eqref{eq:probOfAssign}, as well as a bound on the error of the approximation. We begin with a brief discussion of ranked assignments, for a detailed discussion we refer to~\cite{murty1968letter}. For a given $\mathcal{Z},\mathcal{X}^{i},\mathcal{L}^{i}$, define 
\begin{align}
p(\mathcal{A}) := p(\mathcal{Z} \mid \mathcal{X}^{i},\mathcal{L}^{i},\mathcal{A}). \label{eq:probShorthand}
\end{align}
We may then define an ordering over a set of possible assignments $\mathbb{A}$, such that 
\begin{align}
p(\mathcal{A}_1) \geq p(\mathcal{A}_2) \geq p(\mathcal{A}_3) \geq ... \;\forall\; \mathcal{A}_i\in\mathbb{A}. \label{eq:assignmentRanking}
\end{align}
The methods to construct this ordering may change depending on the model used to determine the likelihood $p(\mathcal{Z} \mid \mathcal{X}^{i},\mathcal{L}^{i},\mathcal{A})$. For example, Murty's algorithm~\cite{murty1968letter,miller1997optimizing} works with any graph based assignment method but is typically associated with the linear sum assignment model. In Section~\ref{sec:implementation}, we use Murty's algorithm combined with the linear assignment method but leave the discussion in this section general to any method of generating ranked assignments.

To approximate the marginal assignment probability, we begin with the $w_{kj}^i$ for a given $\mathcal{Z},\mathcal{X}^{i},\mathcal{L}^{i}$
\begin{align}
w^{i}_{kj} &:= \sum_{\mathcal{A}\in\mathbb{A}_{(k,j)}} p(\mathcal{A} \mid \mathcal{X}^{i},\mathcal{L}^{i},\mathcal{Z}),\nonumber\\
&= \frac{\sum_{\mathcal{A}\in\mathbb{A}_{(k,j)}} p(\mathcal{Z} \mid \mathcal{X}^{i},\mathcal{L}^{i},\mathcal{A})p(\mathcal{A}\mid\mathcal{X}^{i},\mathcal{L}^{i})}{\sum_{\mathcal{A}\in\mathbb{A}} p(\mathcal{Z} \mid \mathcal{X}^{i},\mathcal{L}^{i},\mathcal{A})p(\mathcal{A}\mid\mathcal{X}^{i},\mathcal{L}^{i})}.\nonumber\\
\intertext{Assuming $p(\mathcal{A}\mid\mathcal{X}^{i},\mathcal{L}^{i})$ is uniform, as assumed in~\cite{bowman2017probabilistic}, }
w^{i}_{kj} &= \frac{\sum_{\mathcal{A}\in\mathbb{A}_{(k,j)}} p(\mathcal{A})}{\sum_{\mathcal{A}\in\mathbb{A}} p(\mathcal{A})},\label{eq:truewij}
\end{align} 
where we have $\mathbb{A}$ representing the set of all possible assignments, $\mathbb{A}_{(k,j)}$ the subset of assignments which assign measurement $k$ to landmark $j$, and $p(\mathcal{A})$ as defined in~\eqref{eq:probShorthand}.

Let $\overline{\mathbb{A}} := \{\mathcal{A}_i\}_{i=1}^{K}$ be the set of $K$ likeliest assignments. We define the approximate association probability $\overline{w}^{i}_{kj}$ as 
\begin{align}
\overline{w}^{i}_{kj} := \frac{\sum_{\mathcal{A}\in\overline{\mathbb{A}}_{(k,j)}} p(\mathcal{A})}{\sum_{\mathcal{A}\in\overline{\mathbb{A}}} p(\mathcal{A})},\label{eq:approxwij}
\end{align}
where we have mirrored the previous notation replacing the set of $K$ likeliest hypotheses $\overline{\mathbb{A}}$ for $\mathbb{A}$. It is important to note that $\overline{\mathbb{A}}_{(k,j)}\subseteq\overline{\mathbb{A}}$ is the subset of $\overline{\mathbb{A}}$ which assigns measurement $k$ to landmark $j$, not the $K$ likeliest assignments which assign measurement $k$ to landmark $j$. 

\subsection{Marginal Assignment Probability Error Bound}

In order to quantify the approximation error, i.e. establish a relationship between~\eqref{eq:truewij} and~\eqref{eq:approxwij}, we define the following quantity based on the set of $k-$likeliest assignments. 

\begin{definition}[Association Error Bound]\label{def:assErrBound}
For a given set of measurements $\mathcal{Z}$ with a state and landmark prior $\mathcal{X}^{i},\mathcal{L}^{i}$, let $\overline{\mathbb{A}}$ be the set of $K$ likeliest associations. We define the \emph{association error bound} 
\begin{align}
\gamma(\overline{\mathbb{A}}) &:= \frac{\beta(\overline{\mathbb{A}})}{\beta(\overline{\mathbb{A}}) + \sum_{\mathcal{A}\in\overline{\mathbb{A}}} p(\mathcal{A})}\;, \label{eq:threshold}
\intertext{for some $\beta:2^{\mathcal{A}}\rightarrow\mathbb{R}^+$ satisfying}
\beta(\overline{\mathbb{A}}) &\geq \sum_{\mathcal{A}\in{\mathbb{A}\setminus\overline{\mathbb{A}}}} p(\mathcal{A}), \label{eq:remProbBound}
\end{align}
with $p(\mathcal{A})$ as defined in~\eqref{eq:probShorthand}, and $2^{\mathcal{A}}$ the power set of $\mathcal{A}$.
\end{definition}

Intuitively, we may describe $\beta(\overline{\mathbb{A}})$ as a bound on the total probability not accounted for in $\overline{\mathbb{A}}$, where $\gamma(\overline{\mathbb{A}})$ represents the same quantity as a fraction of the estimated total. With the association bound $\gamma(\overline{\mathbb{A}})$, we prove the following.

\begin{theorem}\label{thm:probNeighbourhood}
For a given set of measurements $\mathcal{Z}$ with a state and landmark prior $\mathcal{X}^{i},\mathcal{L}^{i}$, let $\overline{\mathbb{A}}$ be the set of $k$ highest probability associations and $\gamma(\overline{\mathbb{A}})$ as in Definition~\ref{def:assErrBound}. Then the estimated and true association probabilities $\bar{w}_{kj}^{i},w_{kj}^{i}$, defined in~\eqref{eq:approxwij} and~\eqref{eq:truewij}, satisfy
\begin{align}
w_{kj}^{i} \in [\bar{w}_{kj}^{i}-\gamma(\overline{\mathbb{A}}),\bar{w}_{kj}^{i}+\gamma(\overline{\mathbb{A}})].
\end{align}
\end{theorem}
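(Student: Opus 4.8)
The plan is to rewrite both $w_{kj}^i$ and $\bar{w}_{kj}^i$ in terms of the probability mass that is ``missing'' from the truncated set $\overline{\mathbb{A}}$, and then reduce the theorem to a one-dimensional monotonicity argument. To this end I would introduce four nonnegative quantities: the retained masses $S := \sum_{\mathcal{A}\in\overline{\mathbb{A}}} p(\mathcal{A})$ and $N := \sum_{\mathcal{A}\in\overline{\mathbb{A}}_{(k,j)}} p(\mathcal{A})$, together with the two tail masses $R := \sum_{\mathcal{A}\in\mathbb{A}\setminus\overline{\mathbb{A}}} p(\mathcal{A})$ and $R_{kj} := \sum_{\mathcal{A}\in\mathbb{A}_{(k,j)}\setminus\overline{\mathbb{A}}_{(k,j)}} p(\mathcal{A})$. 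By construction these satisfy the elementary constraints $0 \le N \le S$ and $0 \le R_{kj} \le R$, and — crucially, by the defining property~\eqref{eq:remProbBound} of $\beta$ — the bound $R \le \beta(\overline{\mathbb{A}})$. With this notation the approximation~\eqref{eq:approxwij} reads $\bar{w}_{kj}^i = N/S$, while splitting the sums in~\eqref{eq:truewij} over $\overline{\mathbb{A}}$ and its complement gives $w_{kj}^i = (N + R_{kj})/(S + R)$.

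Next I would form the difference and simplify it to a single compact expression. Placing the two fractions over a common denominator yields the key identity
\begin{align}
w_{kj}^i - \bar{w}_{kj}^i = \frac{(N+R_{kj})S - N(S+R)}{S(S+R)} = \frac{R_{kj}S - N R}{S(S+R)}. \nonumber
\end{align}
The remaining task is to show that this right-hand side lies in $[-\gamma(\overline{\mathbb{A}}),\,\gamma(\overline{\mathbb{A}})]$.

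For the upper bound I would use $R_{kj} \le R$ and $N \ge 0$ to get $R_{kj}S - N R \le RS$, hence $w_{kj}^i - \bar{w}_{kj}^i \le R/(S+R)$; symmetrically, $R_{kj} \ge 0$ and $N \le S$ give $R_{kj}S - N R \ge -RS$, hence $w_{kj}^i - \bar{w}_{kj}^i \ge -R/(S+R)$. The final step is to observe that $R \mapsto R/(S+R)$ is increasing on $\mathbb{R}^+$ (its derivative is $S/(S+R)^2 > 0$, using $S>0$), so that the certified bound $R \le \beta(\overline{\mathbb{A}})$ gives $R/(S+R) \le \beta(\overline{\mathbb{A}})/(\beta(\overline{\mathbb{A}})+S) = \gamma(\overline{\mathbb{A}})$, exactly matching the definition~\eqref{eq:threshold}. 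Combining the two one-sided estimates yields $|w_{kj}^i - \bar{w}_{kj}^i| \le \gamma(\overline{\mathbb{A}})$, which is the claim.

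The argument is largely mechanical, and the one point requiring genuine care is the ordering of the last two bounds: the tail mass $R$ is itself unknown, so I cannot substitute independent worst cases for $R_{kj}$, $N$, \emph{and} $R$ simultaneously. The clean route is precisely the decoupling above — first bound the difference by $\pm R/(S+R)$ for the actual (but unknown) $R$, using only $R_{kj}\le R$ and $N\le S$, and only afterwards invoke monotonicity to replace $R$ by its upper bound $\beta(\overline{\mathbb{A}})$. I would also remark that the radius is not loose: it is attained in the degenerate cases where the entire tail concentrates on assignments that all do, or all do not, map $k$ to $j$, confirming that $\gamma(\overline{\mathbb{A}})$ is the tight error radius.
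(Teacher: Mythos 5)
Your proof is correct and follows essentially the same route as the paper's: both are direct algebraic manipulations of $w_{kj}^i - \bar{w}_{kj}^i$ expressed through retained mass and tail mass, finished with the subset bounds and the definition of $\gamma(\overline{\mathbb{A}})$. The difference is organizational. The paper rewrites the difference as $\frac{R_{kj}}{S+R}(1-\bar{w}_{kj}^i) - \frac{R-R_{kj}}{S+R}\bar{w}_{kj}^i$ (in your notation), applies the triangle inequality, bounds each of the two fractions by $\gamma(\overline{\mathbb{A}})$, and collapses $\gamma(1-\bar{w}) + \gamma\bar{w} = \gamma$; you instead put everything over the common denominator $S(S+R)$, bound the numerator $R_{kj}S - NR$ between $\pm RS$, and only then pass from $R/(S+R)$ to $\gamma(\overline{\mathbb{A}})$ by monotonicity of $x\mapsto x/(S+x)$. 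These are the same computation in different groupings. One point where your write-up is more complete than the paper's: the paper justifies its final inequality solely by the inclusion of the numerator index sets in $\mathbb{A}\setminus\overline{\mathbb{A}}$, but that inclusion alone only bounds the fractions by $R/(S+R)$, not by $\gamma(\overline{\mathbb{A}}) = \beta(\overline{\mathbb{A}})/(\beta(\overline{\mathbb{A}})+S)$; the remaining step, replacing the unknown tail $R$ by its certified bound $\beta(\overline{\mathbb{A}})$ via monotonicity, which is exactly the step you flag as the one requiring care, is left implicit there. A minor quibble on your closing remark: tightness of the radius needs more than the tail concentrating on (or avoiding) the pair $(k,j)$; equality also requires $\bar{w}_{kj}^i \in \{0,1\}$ and $\beta(\overline{\mathbb{A}}) = R$, so $\gamma(\overline{\mathbb{A}})$ is attained only in that doubly degenerate situation.
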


\begin{proof}
See Appendix~\ref{app:probNeighbourhood}.
\end{proof}

To compute the bound $\beta(\overline{\mathbb{A}})$ from~\eqref{eq:remProbBound}, note that all assignments in $\mathbb{A}\setminus\overline{\mathbb{A}}$ have probability upper bounded by the $k$-th likeliest hypothesis enumerated i.e.
\begin{align}
p(\mathcal{A}_{K}) \geq p(\mathcal{A}), \quad \forall \; \mathcal{A}\in \mathbb{A}\setminus\overline{\mathbb{A}}.
\end{align}

Therefore, for given a bound on the number of assignments $N \geq |\mathbb{A}\setminus\overline{\mathbb{A}}|$, we may set $\beta(\overline{\mathbb{A}}) = N\cdot p(\mathcal{A}_{K})$. To bound the number of remaining assignments, we use the $\{0,1\}$ matrix permanent upper bound from~\cite{minc1963upper}. This permanent approximation has linear computational complexity in the number of rows, and is only used to compute the association error bound $\gamma(\overline{\mathbb{A}})$.

\section{Implementation}\label{sec:implementation}
% We need layers to draw the block diagram
\pgfdeclarelayer{background}
\pgfdeclarelayer{foreground}
\pgfsetlayers{background,main,foreground}

% Define a few styles and constants
\tikzstyle{textBox}=[draw, text width=10em, 
    text centered, minimum height=4em, minimum width=5em,rounded corners]
\def\blockdist{2.3}
\def\edgedist{2.5}
\begin{figure*}[thpb]

\begin{tikzpicture}
    % \node (naveq) [naveqs] {Navigation equations};
    % Note the use of \path instead of \node at ... below. 
    \node (imgL) [yshift = -0.9cm, xshift=-1cm]{\includegraphics[width=0.16\textwidth]{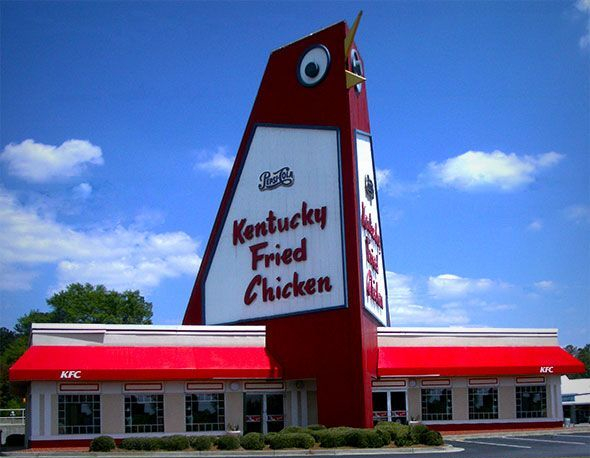}};
    \node (imgR) [below of=imgL, yshift = -1.3cm]{\includegraphics[width=0.16\textwidth]{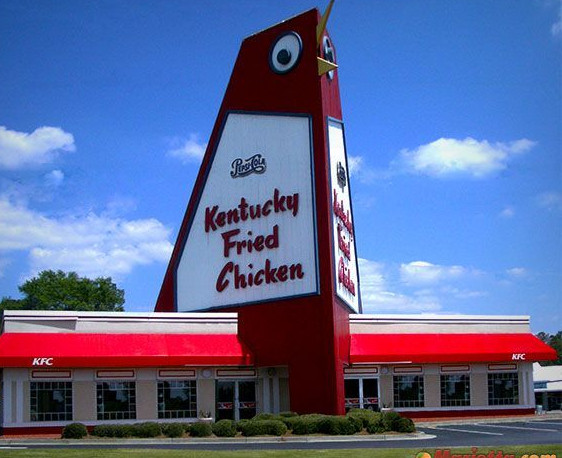}};

    \node (txt1) [above of=imgL, yshift=0.55cm] {Input Stereo Images};
    
    \node (pass) [single arrow] [right of=imgL, draw,  fill=cyan!30, , yshift=-1.1cm, xshift=1.2cm] {\phantom{zzz}};

    \node (semLabel) [right of=imgL, xshift = 4cm, yshift = 0cm] {\includegraphics[width=0.2\textwidth]{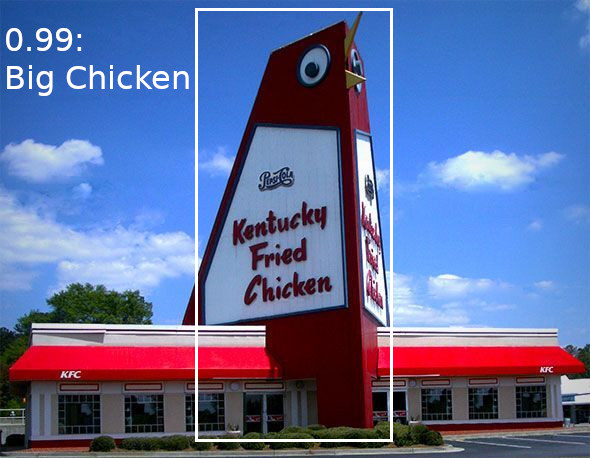}};
    \node (extract) [textBox] [fill=cyan!20, below of=semLabel, xshift = 0cm, yshift = -1.5cm,text width = 2.8cm] {Stereo Bounding Box Association};
    \node (txt2) [above of=semLabel, yshift=0.6cm] {Semantic Data Extraction};

    \node (pass) [single arrow] [right of=semLabel, draw, fill=cyan!30, yshift=0.5cm, xshift=1.6cm] {\phantom{xx}};

    \node (initQ) [textBox] [right of=semLabel, xshift = 3.5cm, yshift = 0.5cm,text width = 2cm] {Initialise Measurement Ellipsoids};

    \node (assoc) [textBox] [right of=initQ, xshift = 3cm, yshift = 0cm,text width = 4cm] {Measurement to Landmark Assignment Probabilities};

    \node (txt3) [above of=assoc, xshift=-2cm, yshift=0cm] {Data Assignment};

    \node (poseGraph) [textBox] [right of=extract, xshift = 5.875cm, yshift = 0cm,text width = 4cm] {GTSAM Pose Graph Optimisation};

    \node (txt3) [above of=assoc, xshift=-2cm, yshift=0cm] {Data Assignment};

    \node (pass3) [single arrow] [minimum width = 4em, draw, below of=assoc, fill=green!30, yshift=-0.39cm, xshift=-1.5cm, rotate=-90] {\phantom{xxx}};
    \node (pass3Text) [below of=assoc, yshift=-0.6cm, xshift=-1.5cm] {$w_{kl}^{i}$};

    \node (pass2) [single arrow] [above of=poseGraph, draw, fill=yellow!30, yshift=0.375cm, xshift=-2.35cm, rotate=90] {\phantom{xxx}};
    \node (pass2Text) [above of=poseGraph, yshift=0.375cm, xshift=-2.35cm] {$\mathcal{X}^{i}$};

    \node (pass3) [single arrow] [below of=assoc, draw, fill=yellow!30, yshift=-0.625cm, xshift=1cm, rotate=90] {\phantom{xxx}};
    \node (pass3Text) [below of=assoc, yshift=-0.6cm, xshift=1.03cm] {$\mathcal{L}^{i}$};

    % To draw them behind the blocks we use pgf layers. This way we  
    % can use the above block coordinates to place the backgrounds   
    \begin{pgfonlayer}{background}
        \path[fill=cyan!20,rounded corners, draw=black!50, dashed]
            (-2.75,-4.5) rectangle (0.75,0.9);
        \path[fill=cyan!20,rounded corners, draw=black!50, dashed]
            (1.85,-4.5) rectangle (6.2,0.9);

        \path[fill=green!20,rounded corners, draw=black!50, dashed]
            (7.15,-1.35) rectangle (14.75,0.9);

        \path[fill=yellow!20,rounded corners, draw=black!50, dashed]
            (7.15,-4.5) rectangle (14.75,-2.5);

  % % 1st pass: draw arrows
  % \draw[vecArrow] (a) to (b);
  % \draw[vecArrow] (k) |- (c);

  % % 2nd pass: copy all from 1st pass, and replace vecArrow with innerWhite
  % \draw[innerWhite] (a) to (b);
  % \draw[innerWhite] (k) |- (c);

        % \path[fill=cyan!20,rounded corners, draw=black!50, dashed]
        %     (3.5,-3.8) rectangle (8.5,2);
    %         (a) rectangle (b);
    %     % Compute a few helper coordinates
    %     \path (gyros.west |- naveq.north)+(-0.5,0.3) node (a) {};
    %     \path (INS.south -| naveq.east)+(+0.3,-0.2) node (b) {};
    %     \path[fill=yellow!20,rounded corners, draw=black!50, dashed]
    %         (a) rectangle (b);
    %     \path (gyros.north west)+(-0.2,0.2) node (a) {};
    %     \path (IMU.south -| gyros.east)+(+0.2,-0.2) node (b) {};
    %     \path[fill=cyan!10,rounded corners, draw=black!50, dashed]
    %         (a) rectangle (b);
    \end{pgfonlayer}
\end{tikzpicture}
\caption{Pipeline for semantic SLAM data extraction. \label{fig:slamPipeline}}
\end{figure*}
% \begin{remark}[Ellipsoid Initialisation]
% Given that the symmetric $4\times4$ matrices used in~\cite{nicholson2018quadricslam} have $9$ degrees of freedom (a single element is fixed for scaling), a single pair of bounding boxes is insufficient to uniquely construct an ellipsoidal measurement $Q_i$. To construct an ellipsoid from a pair of measurements we include a \emph{semantic depth prior}, constraining the depth of the quadric based on it's detected semantic class. These initialised quadrics are only used for data association, the landmark generation is done directly from the set of bounding box constraints. 
% \end{remark}

A C++ implementation of all the code and data used to generate the following results are freely available on github\footnote{ https://github.com/EladMichael/probabilisticSemSlam }. For the SLAM data, we use the KITTI odometry data set\cite{kitti}. For semantic bounding box extraction, we use Yolov5\cite{glenn_jocher_2022_6222936}, which provides a small enough network to extract measurements from 2 images in $~50ms$ for both on a CPU. At each time step we extract semantic bounding boxes, as shown in Figure~\ref{fig:slamPipeline}, and associate the bounding boxes between the image pair. To represent the semantic landmarks, we use the dual quadric framework from~\cite{nicholson2018quadricslam}. Each landmark is modelled as an ellipsoid in three dimensions, which can be compactly represented as a $4\times4$ positive semi-definite matrix $Q$. For complete details, refer to~\cite{nicholson2018quadricslam}. Given two such ellipsoids $Q_1,Q_2$, we define the distance function
\begin{align}
d(Q_1,Q_2) := (\mu_1 - \mu_2)(P_1 + P_2)^{-1}(\mu_1-\mu_2),\label{eq:bhatDist}
\end{align}
where $\mu_i,P_i$ are the mean and covariance matrix of the ellipsoid $i$, which can be constructed from the matrix $Q_i$. To turn the measured semantic bounding boxes into ellipsoids for assignment, we use simple stereo triangulation to generate a $3D$ ellipsoid representing the detected object. Note that the ellipsoid generated in this stage is only used to calculate the assignment probability, the landmark parameters are constrained directly by the bounding boxes in the resulting pose graph. Using the measurement and landmark ellipsoids, we can construct all of the pairwise distances~\eqref{eq:bhatDist}, and compute the association probabilities $\bar{w}^{i}_{kj}$ for all measurements $z_k$ and landmarks $l_j$. We also include a ``null'' association, with a fixed cost, which is interpreted as a non-assignment and results in the measurement constructing a new landmark. After computing the association probabilities~\eqref{eq:approxwij}, we use the bounding boxes extracted from the data set, as well as the provided ground truth odometry, to construct the non-linear least squares problem
\begin{align}
\begin{split}
\mathcal{X}^{i},\mathcal{L}^{i} &= \argmin_{\mathcal{X},\mathcal{L}} \underbrace{\sum_{t}||f(x_{t},u_{t})\ominus x_{t+1}||^2_{R_o}}_{\textrm{odometry factors}} \\
&+ \underbrace{\sum_{k,l}||b_{k} - \beta_{(x_{\alpha_k},q_l)}||^2_{R_s/\bar{w}^{i}_{kj}}}_{\textrm{semantic factors}}, \label{eq:poseGraph}
\end{split}
\end{align}
where $R_o,R_s$ are the odometry and semantic covariance matrices, $f(x_{t},u_{t})\ominus x_{t+1}$ the predicted robot pose error computed in SE$(3)$, $||b_{k} - \beta_{(x_{\alpha_k},q_l)}||$ the geometric error described in~\cite{nicholson2018quadricslam}, and $\bar{w}^{i}_{kj}$ the approximate association probabilities from~\eqref{eq:approxwij}.

For our SLAM back end to solve~\eqref{eq:poseGraph}, we use GTSAM~\cite{dellaert2012factor}, which provides real time performance for large incremental pose graph optimisations. The estimates are computed using a sliding window of $50$ poses, to model a real-time SLAM algorithm. Our focus in this work was the extraction of realistic semantic assignment problems, so the odometry provided to the algorithm was the ground truth, coupled with a tight covariance ($1$cm in translation, $.001$rad in rotation). By using these tight covariances, the pose graph optimisation~\eqref{eq:poseGraph} constructs the semantic landmarks from the assigned measurements without constructing many erroneous landmarks from incorrect pose estimation. 

The experiments were run on a 2016 HP Elitebook, with an $i7$ processor and $8$GB of RAM. We only compare the times required to compute the association probabilities, given the set of pairwise distances~\eqref{eq:bhatDist} from each measurement to landmark. To compare, we ran simulations using several values of $k$ when enumerating the $k$-best assignments, as well as running both the exact and approximate permanent methods used in~\cite{bowman2017probabilistic}. Further, in the implementation of the matrix permanent code, there is a function which attempts to choose the faster of the exact and approximate methods. Rather than relying on this logic, we compute both methods in our test, and choose the fastest for a best case comparison which is labelled ``Fastest Permanent'' in the results. For $k = 200$ enumerated assignments, the timing results from the $~4500$ assignment problems in KITTI sequence $00$ are show in Figure~\ref{fig:speedScatter}. The data points correspond to the log base $10$ of the milliseconds used, i.e. $-3,0,3$ are microseconds, milliseconds, and seconds respectively. In colour, we have indicated the largest dimension of the $n\times m$ assignment problem. We see from Figure~\ref{fig:speedScatter} that the fastest permanent and the k-best assignment enumeration take approximately the same speed for smaller problems, with max dimension$\approx 15$. After this point however, the fastest permanent method slows down dramatically, reaching $100$s of milliseconds per problem, while the assignment method never breaks $1$ millisecond per problem within this dataset. 

\begin{figure}[thpb]
 \centering
 \includegraphics[scale=0.4]{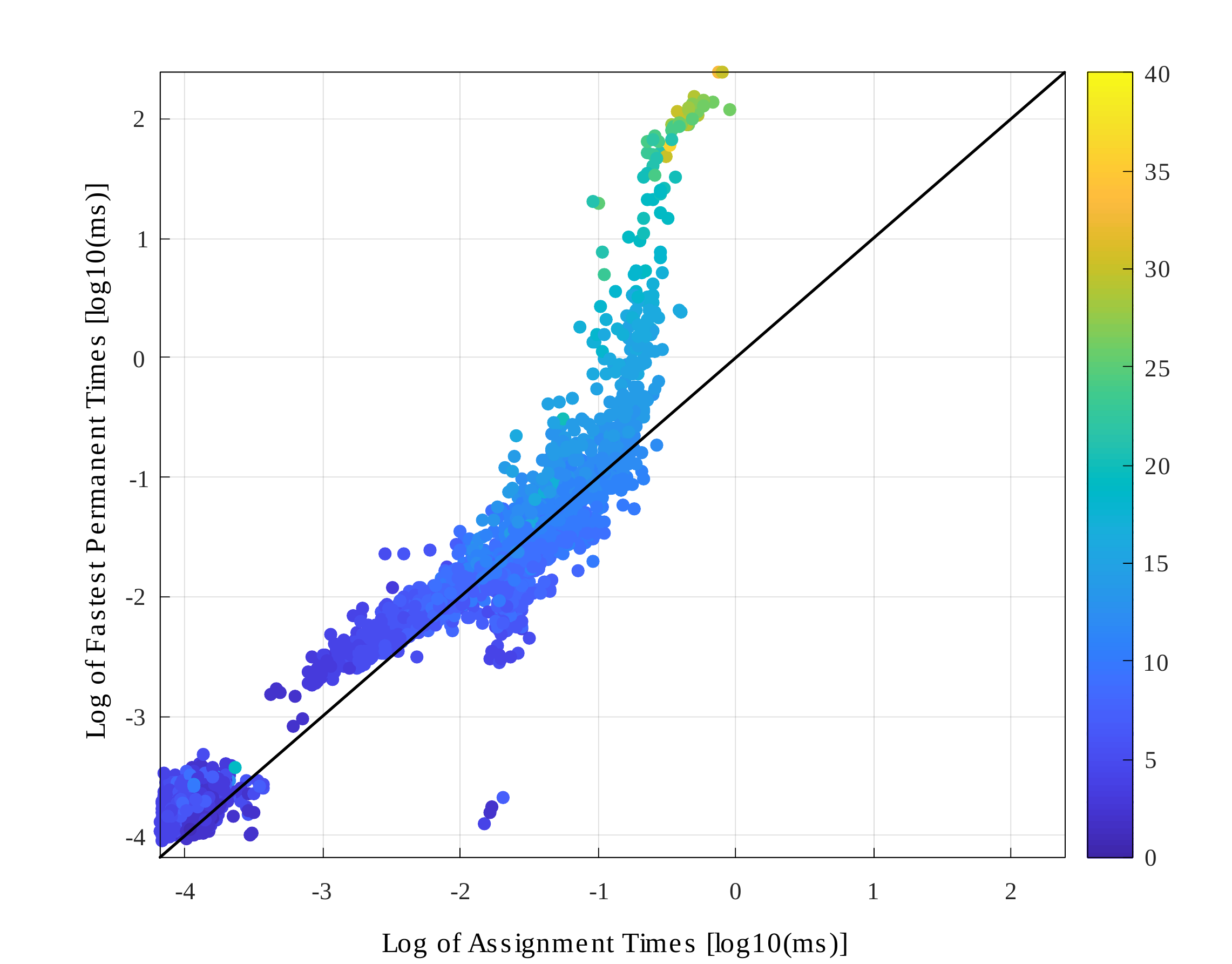}
 \caption{Computation time using the $200$ best assignments and the fastest permanent method. \label{fig:speedScatter}}
\end{figure}

In addition to the speed, we are interested in the accuracy of these approximation methods. To compare them, we computed the complete set of assignment probabilities $p(\mathcal{A}_i)$, in order to compute the probabilities $\{w^i_{kj}\}$ for all measurements $z_k$ and landmarks $l_j$, as in~\eqref{eq:truewij}. If there are more than $20000$ of these terms, we only compute the $20000$ largest. Nominally, this is the definition of the exact permanent, but the algorithm provided from~\cite{bowman2017probabilistic} has numerical overflow issues with large matrices. The brute force method avoids these numerical overflow issues the same way the k-best enumeration does, by operating on the log likelihoods rather than the probabilities. For the error, for an assignment problem at iteration $t$ with given pose and landmark priors $\mathcal{X}^{i},\mathcal{L}^{i}$ we compute
\begin{align}
\delta_t := \max_{k,j}\;|w^{i}_{kj} - \bar{w}^{i}_{kj}|,
\end{align}
where $w^{i}_{kj}$ is the truth association probability, as defined in~\eqref{eq:truewij} and computed by brute force enumeration, and $\bar{w}^{i}_{jk}$ is the approximate probability computed either by permanent or $k$-best assignment enumeration. In Figure~\ref{fig:ordStat}, we show the order statistics of these $\delta_t$ errors from the $\approx 4500$ assignment problems generated from KITTI sequence $0$. As well as computing the top $200$ assignments, as in Figure~\ref{fig:speedScatter}, we include the results from only computing the top $20$ assignments. 
\begin{figure}[thpb]
 \centering
 \includegraphics[scale=0.38]{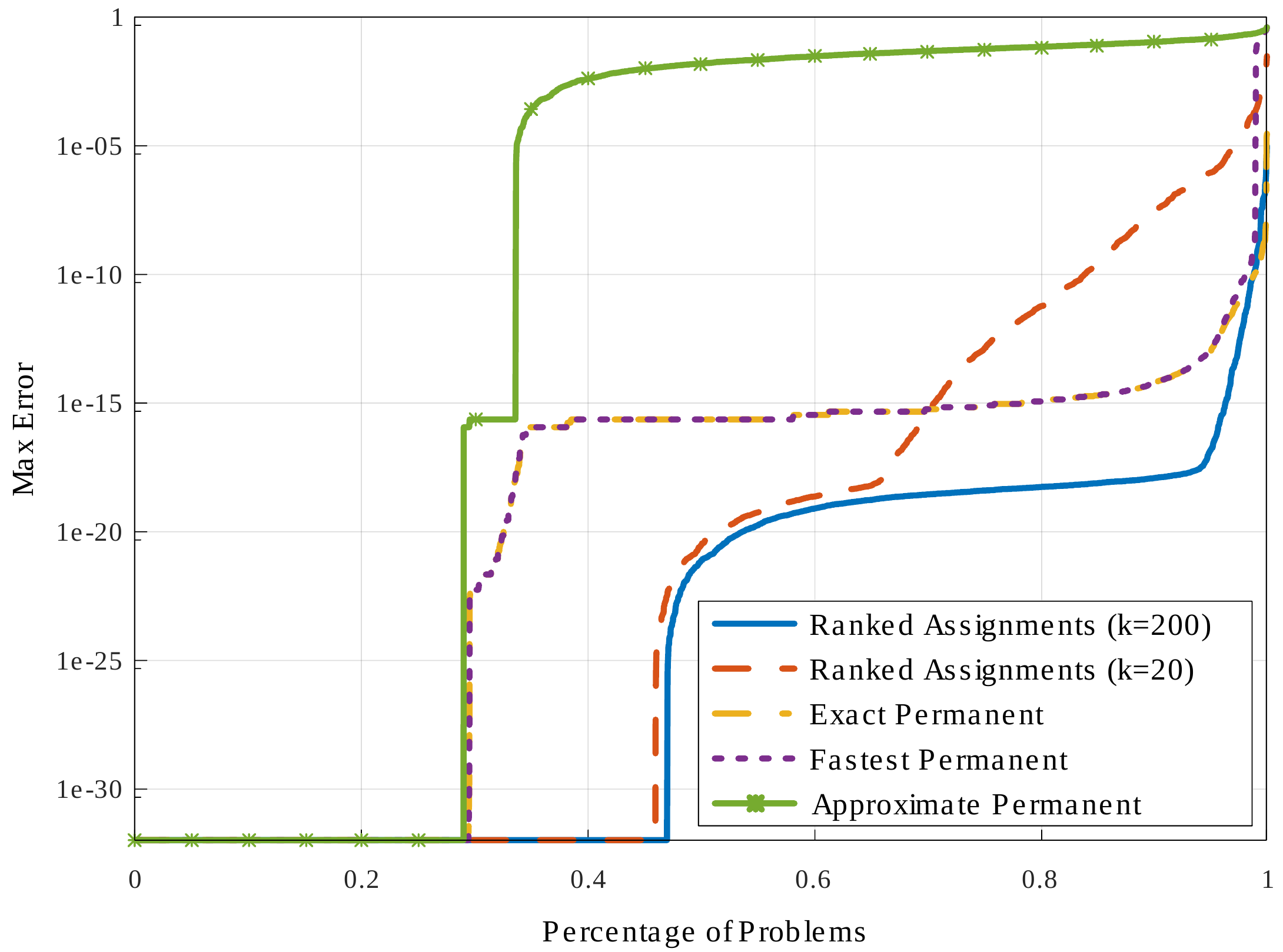}
 \caption{Order statistics of the worst case probability error. \label{fig:ordStat}}
\end{figure}
Note the x-axis of Figure~\ref{fig:ordStat} is normalized so it may be interpreted as a percentage, for example at $0.8$ the ranked assignment with $k=200$ has order statistic of approximately $10^{-17}$ and thus $80$ percent of problems had error equal to or less than $10^{-17}$. For approximately $65\%$ of the problems, there was little difference in using $20$ vs. $200$ assignments, although after that point the $20$ best assignments is excluding somewhat probable assignments, leading to rising errors. Both the $k$-best assignment with $k=200$ and the exact permanent have worst case error of approximately $10^{-5}$, however the fastest permanent method has a significantly higher worst case error, as the exact permanent becomes extremely slow for larger matrices and the approximate permanent becomes the fastest. Furthermore, for nearly $90$ percent of the problems in this data set, the $k$-best assignment with $k=200$ has a worst case error less than $10^{-16}$, which is generally the numerical precision of a floating point number in C++.

\section{Conclusion}\label{sec:conclusion}
In this work, we discussed the probabilistic assignment approach, which allows for robustness to incorrect assignments without maintaining parallel filters. We further show that enumerating the $k$-best assignments is a fast, scalable, and accurate method of computing the probability a measurement was generated by a landmark. This approach is common within the target tracking literature, as discussed in Section~\ref{sec:intro}, and the authors believe it is perfectly suited to the problem of semantic SLAM. In future work, we hope to incorporate probabilistic data association into a complete semantic SLAM framework, incorporate alternative assignment models beyond the linear assignment, and focus on sparse and long term semantic SLAM mapping.

%%%%%%%%%%%%%%%%%%%%%%%%%%%%%%%%%%%%%%%%%%%%%%%%%%%%%%%%%%%%%%%%%%%%%%%%%%%%%%%%
\section*{APPENDIX}

\section{Proof of Theorem~\ref{thm:probNeighbourhood}} \label{app:probNeighbourhood}

This proof is a straight forward, if slightly involved, manipulation of the definitions of $w_{kj}^{i} $ and $ \bar{w}_{kj}^{i}$, given the definitions in~\eqref{eq:truewij}-\eqref{eq:approxwij}, and the definition of $\gamma(\overline{\mathbb{A}})$ given in Definition~\ref{def:assErrBound}. Beginning with the difference of the association probabilities,
\begin{align*}
&| w_{kj}^{i} - \bar{w}_{kj}^{i}| = \left| \frac{ \sum_{\mathcal{A}\in\mathbb{A}_{(k,j)}} p(\mathcal{A}) }{ \sum_{\mathcal{A}\in\mathbb{A}} p(\mathcal{A}) } - \frac{ \sum_{\mathcal{A}\in\overline{\mathbb{A}}_{(k,j)}} p(\mathcal{A}) }{ \sum_{\mathcal{A}\in\overline{\mathbb{A}}} p(\mathcal{A}) } \right| \\
&= \left| (\sum_{\mathcal{A}\in\overline{\mathbb{A}}_{(k,j)}} p(\mathcal{A}))\left(  \frac{1}{ \sum_{\mathcal{A}\in\mathbb{A}} p(\mathcal{A}) } - \frac{1}{ \sum_{\mathcal{A}\in\overline{\mathbb{A}}} p(\mathcal{A}) } \right)  \right.\\
 &\hspace{4.6cm} + \left. \frac{ \sum_{\mathcal{A}\in\mathbb{A}_{(k,j)}\setminus \overline{\mathbb{A}}} p(\mathcal{A}) }{ \sum_{\mathcal{A}\in\mathbb{A}} p(\mathcal{A}) } \right| \\
&= \left| \frac{\sum_{\mathcal{A}\in\overline{\mathbb{A}}_{(k,j)}} p(\mathcal{A}) }{ \sum_{\mathcal{A}\in\overline{\mathbb{A}}} p(\mathcal{A}) } \left(  \frac{\sum_{\mathcal{A}\in\overline{\mathbb{A}}} p(\mathcal{A})}{ \sum_{\mathcal{A}\in\mathbb{A}} p(\mathcal{A}) } - 1  \right) \right. \\
&\hspace{4.6cm}\left. + \frac{ \sum_{\mathcal{A}\in\mathbb{A}_{(k,j)}\setminus \overline{\mathbb{A}}} p(\mathcal{A}) }{ \sum_{\mathcal{A}\in\mathbb{A}} p(\mathcal{A}) } \right| \\
&= \left| \bar{w}_{kj}^{i} \left(  - \frac{\sum_{\mathcal{A}\in\mathbb{A}\setminus \overline{\mathbb{A}}} p(\mathcal{A})}{ \sum_{\mathcal{A}\in\mathbb{A}} p(\mathcal{A}) }  \right) + \frac{ \sum_{\mathcal{A}\in\mathbb{A}_{(k,j)}\setminus \overline{\mathbb{A}}} p(\mathcal{A}) }{ \sum_{\mathcal{A}\in\mathbb{A}} p(\mathcal{A}) } \right| \\
&= \left| \frac{ \sum_{\mathcal{A}\in\mathbb{A}_{(k,j)}\setminus \overline{\mathbb{A}}} p(\mathcal{A}) }{ \sum_{\mathcal{A}\in\mathbb{A}} p(\mathcal{A}) } (1 - \bar{w}_{kj}^{i}) \right.\\
&\hspace{3.5cm}\left. - \frac{\sum_{\mathcal{A}\in(\mathbb{A}\setminus \overline{\mathbb{A}})\setminus\mathbb{A}_{(k,j)}} p(\mathcal{A})}{ \sum_{\mathcal{A}\in\mathbb{A}} p(\mathcal{A}) }\bar{w}_{kj}^{i}  \right| \\
&\leq  \frac{ \sum_{\mathcal{A}\in\mathbb{A}_{(k,j)}\setminus \overline{\mathbb{A}}} p(\mathcal{A}) }{ \sum_{\mathcal{A}\in\mathbb{A}} p(\mathcal{A}) } (1 - \bar{w}_{kj}^{i})\\
&\hspace{3.5cm} + \frac{\sum_{\mathcal{A}\in(\mathbb{A}\setminus \overline{\mathbb{A}})\setminus\mathbb{A}_{(k,j)}} p(\mathcal{A})}{ \sum_{\mathcal{A}\in\mathbb{A}} p(\mathcal{A}) }\bar{w}_{kj}^{i}  \\
\intertext{Given that the sets $\mathbb{A}_{(k,j)}\setminus \overline{\mathbb{A}}$ and $(\mathbb{A}\setminus \overline{\mathbb{A}})\setminus\mathbb{A}_{(k,j)}$ from the numerators are subsets of $\mathbb{A}\setminus \overline{\mathbb{A}}$}
&| w_{kj}^{i} - \bar{w}_{kj}^{i}|\; \leq\;  \gamma(\overline{\mathbb{A}}) (1 - \bar{w}_{kj}^{i}) + \gamma(\overline{\mathbb{A}})\bar{w}_{kj}^{i}  \\
&| w_{kj}^{i} - \bar{w}_{kj}^{i}|\; \leq\; \gamma(\overline{\mathbb{A}})
\end{align*}

\section*{ACKNOWLEDGMENT}
This work is supported by The Australian Government, via grant AUSMURIB000001 associated
with ONR MURI grant N00014-19-1-2571.

%%%%%%%%%%%%%%%%%%%%%%%%%%%%%%%%%%%%%%%%%%%%%%%%%%%%%%%%%%%%%%%%%%%%%%%%%%%%%%%%

\bibliographystyle{IEEEtran.bst}  
\bibliography{references}

\end{document}